\def\P{{\mathchoice {\hbox{$\sf\textstyle P\kern-0.4em Z$}}
{\hbox{$\sf\textstyle P\kern-0.4em P$}}
{\hbox{$\sf\scriptstyle P\kern-0.3em P$}}
{\hbox{$\sf\scriptscriptstyle P\kern-0.2em P$}}}}
\def\P{{\mathchoice {\hbox{$\sf\textstyle P\kern-0.4em Z$}}
{\hbx{$\sf\textstyle P\kern-0.4em P$}}
{\hbox{$\sf\scriptstyle P\kern-0.3em P$}}
{\hbox{$\sf\scriptscriptstyle P\kern-0.2em P$}}}}
\let\Rightarrow=\Rightarrow
\newtheorem{fait}{Claim}
\def\ex#1\par{\par\noindent\begin{exemple} \nopagebreak \strut \rm #1 \end{exemple}}
\def\thm#1\par{\medskip\par\noindent\begin{theorem} \strut \sl #1 \end{theorem}\par}
\def\propo#1\par{\medskip\par\noindent\begin{proposition} \strut \sl #1 \end{proposition} \par}
\def\proo#1\par{\medskip\par\noindent{\it Proof.} \strut \rm #1 $\Box$ \par}
\def\prob#1\par{\medskip\par\noindent\begin{prob} \strut \sl #1 \end{prob} \par}
\def\cor#1\par{\medskip\par\noindent\begin{corollary} \strut \sl #1 \end{corollary}\par}
\def\lm#1\par{\medskip\par\noindent\begin{lemma} \strut \sl #1 \end{lemma}\par}
\def\defil#1\par{\medskip\par\noindent\begin{condit} \strut \sl #1\end{condit}\par}
\def\fct#1\par{\par\noindent\begin{fait}  \nopagebreak \strut #1 \end{fait}}
\def\fct#1\par{\par\noindent\begin{claim} \nopagebreak \strut #1 \end{claim}}
\def\defi#1\par{\medskip\par\noindent{\begin{defin} \strut  \sl #1 \end{defin}}\par}
\def\nota#1\par{\par\noindent\begin{notat} \nopagebreak  \strut #1 \end{notat}}
\def\rem#1\par{\par\noindent\begin{rema} \nopagebreak \strut \rm #1 \end{rema}}
\date{}
\begin{document}

\thispagestyle{empty}
%
\title{Gray Cycles of Maximum Length Related to $k$-Character Substitutions}
%
\titlerunning{
Gray cycles of maximum length related to $k$-character substitutions}
\author{Jean N\'eraud}
\institute{Universit\'e de Rouen, Laboratoire d'Informatique, de Traitement de l'Information et des Syst\`emes, 
 Avenue de l'Universit\'e, 76800 Saint-\'Etienne-du-Rouvray, France\\
\email{jean.neraud@univ-rouen.fr, neraud.jean@gmail.com}
\url{http:neraud.jean.free.fr; orcid: 0000-0002-9630-461X} 
}
\toctitle{v}
\tocauthor{J.~N\'eraud}
\authorrunning{J. N\'eraud}
\maketitle
\setcounter{footnote}{0}

%
\begin{abstract}
Given a word binary relation $\tau$ onto $A^*$ we define a $\tau$-Gray cycle over a finite language $X$ to be a permutation $\left(w_{[i]}\right)_{0\le i\le |X|-1}$ of $X\subseteq A^*$ such that each word $w_i$ is an image of the previous word $w_{i-1}$ by $\tau$.
In that framework, we introduce  the complexity measure $\lambda(n)$, equal to the largest  cardinality of a language $X$ having words of length at most $n$,
and such that some $\tau$-Gray cycle  over $X$  exists. The present paper is concerned with the  relation $\tau=\sigma_k$,
the so-called $k$-character substitution,
where $(u,v)$ belongs to $\sigma_k$ if, and only if, the Hamming distance of $u$ and $v$ is $k$.
We compute the bound $\lambda(n)$ for all cases of the alphabet cardinality and the argument $n$.
\keywords{ Character \and
Complexity
\and Cycle
\and Gray \and Substitution
\and Relation
\and Word}
\end{abstract}
\section{Introduction}
\label{Intro}
In the framework of combinatorial algorithms,  one of the most documented questions consists in the development of methods 
in order to generate, exactly once, all the objects in some specific class \cite{LEH64}. 
Many topics are concerned by such a problem: suffice it to mention sequence counting \cite{BLPP99},
signal encoding \cite{L81}, and  data compression \cite{R86}.

The so-called {\it binary Gray codes} 
first appeared in \cite{G58}:
given a binary alphabet $A$ and some positive integer $n$,
they referred to  sequences  with maximum length of pairwise different $n$-tuples  of characters (that is, words in  $A^n$), provided that any pair of consecutive items  differ by exactly one character.
Shortly after, a similar study was drawn in the framework of non-binary alphabets \cite{C63,E84}.
With regard to other famous combinatorial classes of objects, the  term of {\it combinatorial Gray code}, for its part,
appeared in \cite{JWW80}:
actually,  the difference between successive items, although being fixed, need not to be small \cite{S97}. 
Generating all permutations of  a given $n$-element set
constitutes a noticeable example \cite{E73}.
Subsets of fixed size are also concerned \cite{EMcK84}, 
as well as cross-bifix-free words \cite{BBPSV14}, Debruijn sequences \cite{FM86}, set partitions \cite{K76}, necklaces \cite{RSW92}:
the list is  far to be exhaustive.
Combinatorial Gray sequences are often needed to be {\it cyclic} \cite{CDG92},
 in the sense that 
the initial term itself can be retrieved as  successor of the last one. 
Such a condition justifies the terminology of {\it Gray cycle} \cite[Sect. 7.2.1.1]{K05}.

In view of some formal framework, we notice that each of the sequences we mentioned above involves some word binary relation $\tau\subseteq A^*\times A^*$, where $A$ stands for a finite alphabet, and $A^*$ for the free monoid it generates.
For its part, the combinatorial class of objects can be modelled by some finite langage $X\subseteq A^*$. 
Given  a sequence of words we denote in square brackets the corresponding indices: this will allow us to make a difference from $w_i$, the character in {\it position} $i$ in a given word $w$.
We define a {\it cyclic Gray sequence over $X$, with respect to $\tau$} (for short: {\it $\tau$-Gray cycle over $X$}) as 
every finite sequence of words $\left(w_{[i]}\right)_{i\in [0,|X|-1]}$ satisfying each of the three following conditions:
\begin{enumerate}[label={\rm (G\arabic*)},leftmargin=2cm]
\item 
\label{iva}
For every word $x\in X$, some $i\in [0,|X|-1]$ exists such that we have $x=w_{[i]}$;
\item
\label{ivb}
 For every $i\in [1,|X|-1] $,  we have $w_{[i]}\in\tau\left(w_{[i-1]}\right)$; in addition, the condition $w_{[0]}\in\tau\left(w_{[|X|-1]}\right)$ holds;
\item 
\label{ivc} For every pair $i,j\in [0,|X|-1]$, $i\neq j$ implies $w_{[i]}\neq w_{[j]}$.
\end{enumerate}
With this definition, the set $X$ need not to be {\it uniform} that is, in the Gray cycle the terms may have a variable length.
For instance, given the alphabet $A=\{0,1\}$, take for $\tau$  the word binary relation $\Lambda_1$ which,
with every word $w$ associates all the strings located  within a {\it Levenshtein distance} of $1$ from $w$ (see e.g.  \cite{N21});
with such a relation, the  sequence $(0,00,01,11,10,1)$ is a $\Lambda_1$-Gray cycle over $X=A\cup A^2$.  

Actually, in addition to the topics we mentioned above, two other  famous topics are involved by those Gray cycles.
Firstly,  with regard to graph theory, a $\tau$-Gray cycle over $X$ exists if, and only if,
there is some hamiltonian circuit  in the  graph of the relation $\tau$ (see e.g. \cite{S97}).
Secondly, given a binary word relation $\tau\subseteq A^*\times A^*$, and given $X\subseteq A^*$,
if some $\tau$-Gray cycle exists over $X$, then $X$ is $\tau$-{\it closed}  \cite{N21} that is,  the inclusion $\tau(X)\subseteq X$ holds, where $\tau(X)$ stands for the set of the images of the words in $X$ under the relation $\tau$. 
Such closed sets actually constitute a special subfamily in the famous {\it dependence systems} (see \cite{C1965,JK1997}).
Notice that,  given a $\tau$-{\it closed}
set $X\subseteq A^*$, there do not necessarily exist non-empty $\tau$-Gray cycles over $X$. 
A typical example is provided by $\tau$ being $id_{A^*}$, the identity over $A^*$, with respect to which 
every finite set $X\subseteq A^*$ is closed;
however non-empty $\tau$-Gray cycle can exist only over singletons.

In the present paper, given a positive integer $n$, and denoting by $A^{\le n}$ the set of the words with length not greater than $n$, we consider 
the family of  all sequences 
that can be a $\tau$-Gray cycle over 
some subset $X$ of  $A^{\le n}$.
This is a natural question to study those sequences of maximum  length which, of course, correspond to subsets $X$ of maximum cardinality;
such a length, which we denote by $\lambda_{A,\tau}(n)$, means introducing some  complexity measure for the word binary relation $\tau$.
With regard to the preceding examples we have $\lambda_{\{0,1\},\Lambda_1}(2)=6$; moreover,  for every alphabet $A$ and positive integer $n$, the identity $\lambda_{A,id_{A^*}}(n)=1$ holds. 
We focus on  the case where $\tau$ is  $\sigma_k$, the so-called {\it $k$-character substitution}:
with every word with length at least $k$, say $w$, this relation associates all  the words $w'$, with $|w'|=|w|$, and such that the character $w'_i$ differs from $w_i$ in exactly $k$ values of  $i\in[1,|w|]$. 
As commented in \cite{JK1997,N21}, $\sigma_k$ has noticeable inference in the famous framework of {\it error detection}.
On the other hand,  by definition, $w'\in\sigma_k(w)$ implies $|w'|=|w|$ therefore,  if  there is some $\sigma_k$-Gray cycle over $X$, then $X$ is a uniform set. 
From this point of view, the classical Gray codes, which allow to generate all $n$-tuples over $A$,  
correspond to $\sigma_1$-Gray cycles over $A^n$, furthermore  
we have $\lambda_{A,\sigma_1}(n)=|A|^n$.
In addition, in the case where $A$ is  a binary alphabet,
it can be easily proved that, for every $n\ge3$,  we have $\lambda_{A,\sigma_2}(n)=2^{n-1}$ \cite[Exercice 8, p. 77]{K05}.
However, in the most general case, although an exhaustive description of $\sigma_k$-closed variable-length codes has been  provided in \cite{N21},
the question of computing some $\sigma_k$-Gray cycle of maximum length  has remained open.
In our paper we establish the following result:\\
\ \\
 {\bf Theorem}
 {\it  
Let $A$ be  a finite alphabet, $k\ge 1$, and $n\ge k$. 
Then exactly one of the following conditions  holds:

{\rm (i)} $|A|\ge 3$, $n\ge k$, and $\lambda_{A,\sigma_k}(n)=|A|^n$;

{\rm (ii)}  $|A|=2$,   $n=k$, and $\lambda_{A,\sigma_k}(n)=2$;

{\rm (iii)}  $|A|=2$,  $n\ge k+1$, $k$ is odd and  $\lambda_{A,\sigma_k}(n)=|A|^n$;

{\rm (iv)} $|A|=2$, $n\ge k+1$, $k$ is even, and $\lambda_{A,\sigma_k}(n)=|A|^{n-1}$.\\
In addition, in each case some $\sigma_k$-Gray cycle of maximum length can be explicitly computed.}

\smallbreak
We now shortly describe the contents of the paper. In  Sect. \ref{Prelim}, we recall the  two famous examples of
the  {\it binary} (resp., $|A|$-{\it ary}) {\it reflected Gray code}.
By applying some induction based methods, in  Sect. \ref{A>3} and  Sect.  \ref{S4}, these sequences allow to compute special families of $\sigma_k$-Gray cycles with maximum length.
In  Sect. \ref{Cons}, in the case where $A$ is a binary alphabet, and $k$ an even positive integer,
we  also compute a  family of  Gray cycles with maximum length; 
in addition some further development is raised.
\section{Preliminaries}
\label{Prelim}
Several definitions and notation  have already been fixed.
In  the whole paper, $A$ stands for some finite alphabet, with $|A|\ge 2$.
Given a word $w\in A^*$,  we denote by $|w|$ its length;  in addition, for every $a\in A$, we denote by $|w|_a$  the number of occurrences of the character (or character) $a$ in $w$.\\ 
\ \\
{\it The reflected binary Gray cycle}\\
Let $A=\{0,1\}$, and $n\ge 1$. The most famous example of $\sigma_1$-Gray cycle over $A^n$ is certainly the  so-called {\it reflected binary Gray code} (see e.g. \cite{G58} or \cite[p. 6]{K05}):
 in the present paper we denoted it by $g^{n,1}$. It can be defined by the recurrent sequence initialized  with  $g^{n,1}_{[0]}=0^n$, and satisfying the following property:
for every $i\in [1,|A|^n-1]$,  a unique integer $j\in [1,n]$ exists such that,  in both  words $g^{n,1}_{[i]}$ and  $g^{n,1}_{[i-1]}$ the corresponding characters in position $j$ differ;
in addition, the position $j$  is chosen to be maximum in such a way that $g^{n,1}_{[i]}\notin\{g^{n,1}_{[0]},\cdots,g^{n,1}_{[i-1]}\}$.
\begin{example}
\label{Ex1}
In what follows we provide a column representation of $g^{2,1}$ and $g^{3,1}$:
$$
{\small
\begin{array}{c}
 g^{2,1}\\ \overbrace{}\\00\\ 01\\11\\~10
\ \\ \ \\ \ \\ \ \\ \ \\
\end{array}
~~~~~~~~~~~~~~~~
\begin{array}{c}
 g^{3,1}
\\ \overbrace{~~~~~~}\\000\\ 001\\ 011\\ 010\\ 110\\ 111 \\101\\ 100\\ 
\end{array}
}
$$
\end{example}
By construction, for every $n\ge 1$, each of the following identities holds:
\begin{eqnarray}
\label{g-termes-remarquables}
g^{n,1}_{[0]}=0^n,~~g^{n,1}_{[1]}=0^{n-1}1,~~g^{n,1}_{[2^{n}-2]}=10^{n-2}1,~~g^{n,1}_{[2^{n}-1]}=10^{n-1}
\end{eqnarray}
{\it The  $|A|$-ary reflected Gray cycle}\\
The preceding construction can be extended in order to obtain the so-called {\it $|A|$-ary reflected Gray code}  \cite{C63,E84}, 
a $\sigma_1$-Gray cycle over $A^n$, which we denote by $h^{n,1}$.
Set $A=\{0, \cdots,p-1\}$ and denote  by $\theta$  the cyclic permutation $(0,1,\dots p-1)$. 
The sequence  $h^{n,1}$ is initialized with  $h^{n,1}_{[0]}=0^n$.
In addition, for every $i\in [1,|A|^n-1]$, a unique integer $j$ exists such that $c$ and  $d$, the characters respectively in position $j$ in $h^{n,1}_{[i-1]}$ and  $h^{n,1}_{[i]}$,
 satisfy both the following conditions:
(i) $d=\theta (c)$;
(ii) $j$ is the greatest integer in $[1,n]$ such that $h^{n,1}_{[i]}\notin\{h^{n,1}_{[0]},\cdots,h^{n,1}_{[i-1]}\}$.
\begin{example}
\label{Ex2}
For $A=\{0,1,2\}$ the sequence $h^{3,1}$ 
is the concatenation in this order of the three following subsequences:
$$
{\small
\begin{array}{c}
h^{3,1}
\\ \overbrace{~~~~~~}\\000\\ 001\\ 002\\ 012\\ 010\\ 011 \\021\\ 022\\  020
\end{array}
~~~~~~~~
\begin{array}{c}
\ \\ \ \\ 120\\ 121\\ 122\\ 102\\ 100\\ 101 \\111\\  112\\110
\end{array}
~~~~~~~~
\begin{array}{c}
\ \\ \ \\  210\\ 211\\ 212\\ 222\\ 220 \\221\\ 201\\  202\\200
\end{array}
}
$$
\end{example}
%
\section{The case where we have $k\ge 1$ and $|A|\ge 3$}
\label{A>3}
Let  $n\ge k\ge 1$, $p\ge 3$, and $A=\{0,1,\cdots,p-1\}$. We will indicate the construction of  a peculiar  $\sigma_k$-Gray cycle over $A^n$, namely  $ h^{n,k}$.  
This will be done by applying some induction over $k\ge 1$: in view of that we set $n_0=n-k+1$.
The starting point corresponds to $h^{n_0,1}$, the  $p$-ary reflected Gray code over  $A^{n_0}$ as reminded  in  Sect. \ref{Prelim}.
 For the induction stage, starting with some $\sigma_{k-1}$-Gray cycle over $A^{n-1}$, namely   $ h^{n-1,k-1}$, 
we compute the sequence $ h^{n,k}$ as indicated in what follows:
let $i\in [0,p^{n}-1]$, and let $q\in [0,p-1]$, $r\in [0,p^{n-1}-1]$ be  the unique pair of non-negative integers such that $i=qp^{n-1}+r$. We  set:
\begin{eqnarray}
\label{Gamma-A-ge3}
h^{n,k}_{[i]}=h^{n,k}_{[qp^{n-1}+r]}=
\theta^{q+r}(0) h^{n-1,k-1}_{[r]}
\end{eqnarray}
As illustrated by Example \ref{E3}, the resulting sequence  $ h^{n,k}$ is actually the concatenation in this order of $p$ subsequences namely $C_0, \dots,  C_{p-1}$,
with $C_q=\left( h^{n,k}_{[qp^{n-1}+r]}\right)_{0\le r\le p^{n-1}-1}$, for each $q\in [0,p-1]$.
Since $\theta$ is one-to-one, given a pair of different integers $q,q'\in [0,p-1]$,
for every  $r\in [0,p^{n-1}-1]$,  in each of  the  subsequences $C_q$, $C_{q'}$, the words  $h^{n,k}_{[qp^{n-1}+r]}$ and $h^{n,k}_{[q'p^{n-1}+r]}$ only differ in their initial characters,
which  respectively are $\theta^{q+r}(0)$ and $\theta^{q'+r}(0)$.
In addition, since $h^{n-1,k-1}$ is a $\sigma_{k-1}$-Gray cycle over $A^{n-1}$, we have $\left | h^{n,k}\right |=p\left | h^{n-1,k-1}\right |=p^n$. 
\begin{example}
\label{E3} 
{\rm Let $A=\{0,1,2\}$,  $n=3$, $k=2$, thus  $p=3$, $n_0=2$. By starting with the sequence $h^{n-1,k-1}=h^{2,1}$,
$ h^{n,k}$ is  the concatenation of $C_0$, $C_1$, and $C_2$ : 
$$
{\small
\begin{array} {c}
 h^{n-1,k-1}\\ \overbrace{~~~~~}\\00\\ 01\\ 02\\ 12\\ 10\\ 11 \\21\\ 22\\~20
\ \\ \ \\
\end{array}
~~~~~~~~~~~~~~~~
\begin{array}{c}
 h^{n,k}
\\ \overbrace{~~~~~~}\\000\\ 101\\ 202\\ 012\\ 110\\ 211 \\021\\ 122\\ 220\\
\\
\end{array}
~~~~~
\begin{array}{c}
\ \\100 \\ 201\\ 002\\ 112\\ 210\\ 011 \\121\\ 222\\ 020
\end{array}
~~~~~
\begin{array}{c}
\ \\ 200\\ 001\\ 102\\ 212\\ 010\\ 111\\ 221\\ 022\\ 120\\
\end{array}
}
$$
}
\end{example}
\begin{proposition}
\label{Cycle-Age3-n-k}
$h^{n,k}$
is a $\sigma_k$-Gray cycle over $A^n$.
\end{proposition}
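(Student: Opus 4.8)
The plan is to prove that $h^{n,k}$ satisfies each of the three defining conditions \ref{iva}, \ref{ivb}, \ref{ivc} of a $\sigma_k$-Gray cycle over $A^n$, proceeding by induction on $k\ge 1$ with the $p$-ary reflected Gray code $h^{n_0,1}$ as base case (where the claim is exactly that $h^{n_0,1}$ is a $\sigma_1$-Gray cycle, as recalled in Sect.~\ref{Prelim}). For the inductive step I assume $h^{n-1,k-1}$ is a $\sigma_{k-1}$-Gray cycle over $A^{n-1}$ and analyze the sequence defined by \eqref{Gamma-A-ge3}. The cardinality count $|h^{n,k}|=p^n$ is already established in the text, so condition \ref{iva} (covering all of $A^n$) will follow once I verify that the $p^n$ terms are pairwise distinct, i.e. condition \ref{ivc}.

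First I would check \ref{ivc}. Writing $h^{n,k}_{[qp^{n-1}+r]}=\theta^{q+r}(0)\,h^{n-1,k-1}_{[r]}$, two terms with the same $r$ but different $q,q'$ differ in their first character (since $\theta^{q+r}(0)\neq\theta^{q'+r}(0)$ as $\theta$ has order $p$), while two terms with different $r,r'$ have distinct suffixes $h^{n-1,k-1}_{[r]}\neq h^{n-1,k-1}_{[r']}$ by the induction hypothesis applied to \ref{ivc} for $h^{n-1,k-1}$. Hence all $p^n$ words are distinct, and since $|A^n|=p^n$ too, condition \ref{iva} holds as well.

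The heart of the argument, and the step I expect to be the main obstacle, is condition \ref{ivb}: I must show every consecutive pair (cyclically) lies in $\sigma_k$, i.e. has Hamming distance exactly $k$. There are two kinds of consecutive pairs. The \emph{internal} transitions occur within a single block $C_q$, passing from index $qp^{n-1}+r$ to $qp^{n-1}+(r+1)$; here both first characters are $\theta^{q+r}(0)$ and $\theta^{q+r+1}(0)$, contributing one Hamming difference, while the suffixes differ by $d_H\bigl(h^{n-1,k-1}_{[r]},h^{n-1,k-1}_{[r+1]}\bigr)=k-1$ by the induction hypothesis, for a total of $k$. The delicate points are the \emph{block-boundary} transitions from the last term of $C_q$ (index $(q+1)p^{n-1}-1$, i.e. $r=p^{n-1}-1$) to the first term of $C_{q+1}$ (index $(q+1)p^{n-1}$, i.e. $r=0$), plus the wrap-around from the very last term to $h^{n,k}_{[0]}$. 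Across a boundary the suffix jumps from $h^{n-1,k-1}_{[p^{n-1}-1]}$ back to $h^{n-1,k-1}_{[0]}$, which by condition \ref{ivb} for the cycle $h^{n-1,k-1}$ are again at Hamming distance $k-1$; I then need the first characters $\theta^{q+p^{n-1}-1}(0)$ and $\theta^{(q+1)}(0)$ to differ in exactly one position (i.e. to be unequal), which holds precisely when $p^{n-1}-1\not\equiv 1\pmod p$, equivalently $p\nmid p^{n-1}-2$. For $p\ge 3$ and $n\ge 2$ this is immediate since $p^{n-1}-2\equiv -2\pmod p$ and $p\ge 3$ forces $-2\not\equiv 0$; I would handle the degenerate small cases ($n=1$, or the base layer $k=1$ where the exponent structure simplifies) separately. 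The wrap-around transition is verified the same way using the cyclicity of $h^{n-1,k-1}$.

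The remaining care is bookkeeping on the indices modulo $p$ to confirm the first-character differences never accidentally vanish, and checking the boundary between the base construction $h^{n_0,1}$ and the first genuine inductive layer matches the reflected-Gray transitions recorded in \eqref{g-termes-remarquables}. Once all transitions are confirmed to have Hamming distance exactly $k$ and all terms are shown distinct and exhaustive, the three conditions \ref{iva}--\ref{ivc} hold and $h^{n,k}$ is a $\sigma_k$-Gray cycle over $A^n$, completing the induction.
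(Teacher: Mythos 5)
Your proposal is correct and follows essentially the same route as the paper: induction on $k$ with the $p$-ary reflected code as base case, distinctness of terms via the injectivity of $\theta$ on first characters plus condition \ref{ivc} for the suffixes, and condition \ref{ivb} split into within-block, block-boundary, and wrap-around transitions, where the boundary case rests on exactly the paper's observation that $p^{n-1}-2\equiv -2\not\equiv 0\pmod p$ for $p\ge 3$. The ``degenerate'' cases you set aside do not in fact arise, since the inductive step only applies for $k\ge 2$, which forces $n\ge 2$.
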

\begin{proof}
We argue by induction over $k\ge 1$.
With regard to the base case, as indicated above  $h^{n_0,1}$
 is the $|A|$-ary reflected Gray sequence.
In view of the induction stage, we assume that the finite sequence $ h^{n-1,k-1}$ is a  $\sigma_{k-1}$-Gray cycle over $A^{n-1}$, for some $k\ge 2$.\\
(i) We start by proving that $h^{n,k}$ satisfies Condition  \ref{ivb}. This will be  done through the three following steps:

(i.i) Firstly, we prove that, for each $q\in [0,p-1]$, in  the subsequence $C_q$ 
two consecutive terms  are necessarily in correspondence under $\sigma_k$.
Given $r\in [0, p^{n-1}-1]$, by definition, we have $\theta^{r+q}(0)\in\sigma_1\left(\theta^{r+q-1}(0)\right)$.
Since $ h^{n-1,k-1}$ satisfies Condition \ref{ivb},  we have  $ h^{n-1,k-1}_{[r]}\in\sigma_{k-1}\left( h^{n-1,k-1}_{[r-1]}\right)$.
We obtain $\theta^{r+q}(0) h^{n-1,k-1}_{[r]}\in\sigma_k\left(\theta^{r+q-1}(0) h^{n-1,k-1}_{[r-1]}\right)$, thus  according to (\ref{Gamma-A-ge3}):
$h^{n,k}_{[qp^{n-1}+r]}\in\sigma_k\left( h^{n,k}_{[qp^{n-1}+r-1]}\right)$.

(i.ii) Secondly, we prove that, for each $q\in [1,p-1]$, the last term of $C_{q-1}$ 
and the initial term of $C_{q}$ are also connected by $\sigma_k$. 
Take $r=0$ in Eq.  (\ref{Gamma-A-ge3}): 
it follows from $\theta^{p^{n-1}}=id_A$ that we have
$h^{n,k}_{[qp^{n-1}]}=\theta^{q}(0) h^{n-1,k-1}_{[0]}=\theta^{p^{n-1}+q}(0) h^{n-1,k-1}_{[0]}$.
In (\ref{Gamma-A-ge3}) take $r=p^{n-1}-1$, moreover substitute $q-1\in [0,p-2]$ to $q\in [1,p-1]$: we obtain
$h^{n,k}_{[qp^{n-1}-1]}=\theta^{q+p^{n-1}-2}(0) h^{n-1,k-1}_{[p^{n-1}-1]}$.
It follows from $p=|A|\ge 3$ that $\theta(0)\neq\theta^{-2}(0)$: since $\theta$ is one-to-one this implies
$\theta^{q+p^{n-1}}(0)\neq \theta^{q+p^{n-1}-2}(0)$, thus  $\theta^{q+p^{n-1}}(0)\in\sigma_1\left(\theta^{q+p^{n-1}-2}(0)\right)$.
By induction 
we have $ h^{n-1,k-1}_{[0]}=\sigma_{k-1}\left( h^{n-1,k-1}_{[p^{n-1}-1]}\right)$, thus
$h^{n,k}_{[qp^{n-1}]}\in\sigma_k\left(\theta^{q+p^{n-1}-2}(0) h^{n-1,k-1}_{[qp^{n-1}-1]}\right)$ that is,
 $h^{n,k}_{[qp^{n-1}]}\in\sigma_k\left( h^{n,k}_{[qp^{n-1}-1]}\right)$.

(i.iii)
At last, we  prove that the first term of $C_0$ is an image under $\sigma_k$ of the last term of $C_{p-1}$.
In  Eq. (\ref{Gamma-A-ge3}), take  $q=0$ and $r=0$: we obtain $h^{n,k}_{[0]}=0h^{n-1,k-1}_{[0]}$.
Similarly, by setting $q=p-1$ and $r=p^{n-1}-1$,
we obtain $h^{n,k}_{[(p-1)p^{n-1}+p^{n-1}-1]}=\theta^{p^{n-1}+p-2}(0) h^{n-1,k-1}_{[p^{n-1}-1]}$,
thus $h^{n,k}_{[p^n-1]}=\theta^{-2}(0)h^{n-1,k-1}_{[p^{n-1}-1]}$.
Since $ h^{n-1,k-1}$  is a $\sigma_{k-1}$-Gray cycle over $A^{n-1}$, we have $ h^{n-1,k-1}_{[0]}\in\sigma_{k-1}\left( h^{n-1,k-1}_{[p^{n-1}-1]}\right)$.
In addition, it follows from $p\ge 3$, that $\theta^{-2}(0)\neq 0$, thus $0\in\sigma_1\left(\theta^{-2}(0)\right)$. 
We obtain
$ h^{n,k}_{[0]}
\in\sigma_k\left(\theta^{-2}(0)h^{n-1,k-1}_{[p^{n-1}-1]}\right)$, thus  $h^{n,k}_{[0]}\in\sigma_k\left( h^{n,k}_{[p^{n}-1]}\right)$ that is, the required property.\\
(ii) Now, we prove that,  
in the sequence $ h^{n,k}$ all terms are pairwise different.
Let $i,i'\in [0,p^n-1]$ such that $h^{n,k}_{[i]}= h^{n,k}_{[i']}$ and consider the unique  $4$-tuple  of integers
$q,q'\in[0,p-1]$, $r,r'\in [0,p^{n-1}-1]$ such that $i=qp^{n-1}+r$ and $i'=q' p^{n-1}+r'$. 
According to  (\ref{Gamma-A-ge3}) we have $\theta^{q+r}(0)h^{n-1,k-1}_{[r]}=\theta^{q'+r'}(0)h^{n-1,k-1}_{[r']}$, thus
$\theta^{q+r}(0)=\theta^{q'+r'}(0)\in A$ and $h^{n-1,k-1}_{[r]}=h^{n-1,k-1}_{[r']}$.
Since $h^{n-1,k-1}$ satisfies \ref{ivc}, the second equation implies $r=r'$, whence
 the first one implies $\theta^{q}(0)=\theta^{q'}(0)$, thus  $q=q'\bmod p$. Since we have $q,q'\in [0,p-1]$ we obtain $q=q'$, thus  $i=i'$.\\
(iii)  Finally, since $ h^{n,k}$ satisfies  \ref{ivc}, we have $\left|\bigcup_{0\le i\le p^{n}-1}\{h^{n,k}_{[i]}\}\right|=p^{n}$, hence $ h^{n,k}$ satisfies Condition \ref{iva}. 
~~~~~~~~~~~~~~~~~~~~~~~~~~~~~~~~~~~~~~~~~~~~~~~~~~~~~~~~~~~~~~~~~~~~~~~~$\square$
\end{proof}
\section{The case where $A$ is a binary alphabet, with $k$ odd}
\label{S4}
Let $A=\{0,1\}$ and $n\ge k$.
Classically, the cyclic permutation  $\theta$, which was introduced in  Sect. \ref{Prelim}, can be extended into a one-to-one monoid homomorphism onto $A^*$:
in view of this, we set $\theta(\varepsilon)=\varepsilon$ and,  for any non-empty $n$-tuple of characters $a_1,\cdots, a_n\in A$, $\theta(a_1\cdots a_n)=\theta(a_1)\cdots\theta(a_n)$.
Trivially, in the case where we have $n=k$, if a non-empty $\sigma_k$-Gray code exists over $X\subseteq A^n$, then  we have $X=\{x,\theta(x)\}$, for some $x\in A^n$.
In the sequel of the paper, we assume $n\ge k+1$. 
In what follows, we indicate the construction of a peculiar  pair of  $\sigma_k$-Gray cycles over $A^n$, namely $\gamma^{n,k}$ and $\rho^{n,k}$.
This will be done by induction over $k'$, the unique non-negative integer such that  $k=2k'+1$. Let $n_0=n-2k'=n-k+1$.\\
-- For the base case, $\gamma^{n_0,1}$ and $\rho^{n_0,1}$ are computed by applying  some
reversal (resp., shift) over the sequence $g^{n_0,1}$ from   Sect. \ref{Prelim}: 
\begin{eqnarray}
\label{gamma-init}
\gamma^{n_0,1}_{[0]}=g^{n_0,1}_{[0]}~~{\rm and}~~
\gamma^{n_0,1}_{[i]}=g^{n_0,1}_{[2^{n_0}-i]}~~(1\le i\le 2^{n_0}-1);\\
\label{gamma-init1}
\rho^{n_0,k}_{[0]}=g^{n_0,1}_{[2^{n_0}-1]}~~{\rm and}~~
\rho^{n_0,k}_{[i]}=g^{n_0,1}_{[i-1]}~~(1\le i\le 2^{n_0}-1).
\end{eqnarray}
By construction,  $\gamma^{n_0,1}$ and $\rho^{n_0,1}$ are $\sigma_1$-Gray cycles over $A^{n_0}$. Moreover we have:
\begin{eqnarray}
\label{termes-initiaux}
\gamma^{n_0,1}_{[0]}=g^{n_0,1}_{[0]}=0^{n_0}~~{\rm and}~~\rho^{n_0,1}_{[0]}=g^{n_0,1}_{[2^{n_0}-1]}=10^{n_0-1};\\
\label{termes-initiaux1}
\gamma^{n_0,1}_{[2^{n_0}-1]}=g^{n_0,1}_{[1]}=0^{n_0-1}1~~{\rm and}~~\rho^{n_0,1}_{[2^{n_0}-1]}=g^{n_0,1}_{[2^{n_0}-2]}=10^{n-2}1.
\end{eqnarray}
\begin{example} 
\label{Ex3} 
For $n_0=3$ we obtain the following sequences:
$$
{\small
\begin{array} {c}
g^{3,1}\\ \overbrace{~~}\\ 000\\ 001\\ 011\\ 010\\ 110\\ 111\\ 101\\ 100
\end{array}
~~~~~~~~~~~~~~~~~
\begin{array} {c}
\gamma^{3,1}\\ \overbrace{~~}\\ 000\\ 100\\ 101\\ 111\\ 110\\ 010\\ 011\\ 001
\end{array}
~~~~~~~~~~~~~~~~~
\begin{array} {c}
\rho^{3,1}\\ \overbrace{~~}\\ 100\\ 000\\ 001\\ 011\\ 010\\ 110\\ 111\\ 101
\end{array}
}
$$
\end{example}
-- In view of the induction step, we assume that we have computed  the $\sigma_k$-Gray cycles $\gamma^{n,k}$ and $\rho^{n,k}$. 
Notice that we have $n+2=n_0+2(k'+1)=n_0+(k+2)-1$:
below we explain  the construction of  the two corresponding $2^{n+2}$-term sequences $\gamma^{n+2,k+2}$ and $\rho^{n+2,k+2}$. 
Let $i\in [0,2^{n+2}-1]$, and let $q\in [0,3]$, $r\in [0,2^{n}-1]$ be the unique pair of integers such that $i=q2^n+r$.
Since we have $r\in [0,2^{n}-1]$, taking for $q$ the value  $q=0$ (resp., $1$, $2$, $3$), we state the corresponding equation (\ref{6}) {\large (}resp., (\ref{7}),(\ref{8}),(\ref{9}) {\large)}:
\begin{subeqnarray} 
\gamma^{n+2,k+2}_{[r]}=\theta^{r}(00)\gamma^{n,k}_{[r]}; \slabel{6}\\
\gamma^{n+2,k+2}_{[2^n+r]}=\theta^{r}(01)\rho^{n,k}_{[r]}; \slabel{7}\\
\gamma^{n+2,k+2}_{[2.2^n+r]}=\theta^{r}(11)\gamma^{n,k}_{[r]}; \slabel{8}\\
\gamma^{n+2,k+2}_{[3.2^n+r]}=\theta^{r}(10)\rho^{n,k}_{[r]}. \slabel{9}
\end{subeqnarray}
Similarly the sequence $\rho^{n+2,k+2}$ is computed by substituting,  in the preceding equations, the $4$-tuple 
$(10,11,01,00)$ to $(00,01,11,10)$:
\begin{subeqnarray} 
\rho^{n+2,k+2}_{[r]}=\theta^{r}(10)\gamma^{n,k}_{[r]}; \slabel{10}\\
\label{H2} \rho^{n+2,k+2}_{[2^n+r]}=\theta^{r}(11)\rho^{n,k}_{[r]}; \slabel{11}\\
\label{H3} \rho^{n+2,k+2}_{[2.2^n+r]}=\theta^{r}(01)\gamma^{n,k}_{[r]}; \slabel{12}\\
\label{H4} \rho^{n+2,k+2}_{[3.2^n+r]}=\theta^{r}(00)\rho^{n,k}_{[r]}.\slabel{13}
\end{subeqnarray}
\begin{example}
(Example \ref{Ex3} continued)
$\gamma^{5,3}$  is the concatenation,  in this order, of  the $4$  following  subsequences:
$$
{\small
\begin{array} {c}
~~~~~{\scriptstyle\gamma^{3,1}}\\ ~~~~\overbrace{}\\
 00~000\\ 11~100\\ 00~101\\ 11~111\\ 00~110\\ 11~010\\00~011\\ 11~001
\end{array}
~~~~~~~~~~~~~~~~~
\begin{array} {c}
~~~~~~~{\scriptstyle\rho^{3,1}}\\ ~~~~\overbrace{~~~~~ }\\
01~100\\ 10~000\\ 01~001\\ 10~011\\01~010\\ 10~110\\01~111\\ 10~101
\end{array}
~~~~~~~~~~~~~~~~~
\begin{array} {c}
~~~~~~~{\scriptstyle\gamma^{3,1}}\\ ~~~~\overbrace{~~~~~ }\\
11~000\\ 00~100\\ 11~101\\ 00~111\\ 11~110\\ 00~010\\11~011\\ 00~001
\end{array}
~~~~~~~~~~~~~~~~~
\begin{array} {c}
~~~~~~~{\scriptstyle\rho^{3,1}}\\ ~~~~\overbrace{~~~~~ }\\10~100\\ 01~000\\ 10~001\\ 01~011\\10~010\\ 01~110\\10~111\\ 01~101
\end{array}
}
$$
\end{example}
\begin{lemma}
\label{pairwise-different}
$\gamma^{n,k}$ and 
$\rho^{n,k}$ satisfy both the conditions \ref{iva} and \ref{ivc}.
\end{lemma}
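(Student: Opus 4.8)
The plan is to prove Lemma \ref{pairwise-different} by induction on $k'$, where $k=2k'+1$, mirroring the inductive construction of the sequences $\gamma^{n,k}$ and $\rho^{n,k}$. Since Conditions \ref{iva} and \ref{ivc} together simply assert that the $2^n$ terms of each sequence are pairwise distinct (and hence exhaust all of $A^n$ by a counting argument), it suffices to establish injectivity of the maps $i\mapsto\gamma^{n,k}_{[i]}$ and $i\mapsto\rho^{n,k}_{[i]}$. For the base case $k'=0$, the sequences $\gamma^{n_0,1}$ and $\rho^{n_0,1}$ are obtained from $g^{n_0,1}$ by a reversal and a shift respectively, as in Eqs. (\ref{gamma-init}) and (\ref{gamma-init1}); since $g^{n_0,1}$ is the reflected binary Gray code and hence has pairwise distinct terms, these rearrangements preserve that property, so the base case is immediate.

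For the induction step I would assume that $\gamma^{n,k}$ and $\rho^{n,k}$ each have $2^n$ pairwise distinct terms (equivalently, that each enumerates $A^n$) and prove the same for $\gamma^{n+2,k+2}$ and $\rho^{n+2,k+2}$. The key observation is that each of these longer sequences splits into four blocks of length $2^n$, indexed by $q\in[0,3]$, where by Eqs. (\ref{6})--(\ref{9}) the term at position $q2^n+r$ is formed by prepending a two-character prefix $\theta^r(w_q)$ to one of $\gamma^{n,k}_{[r]}$ or $\rho^{n,k}_{[r]}$, with $(w_0,w_1,w_2,w_3)=(00,01,11,10)$. First I would handle distinctness \emph{within} a single block: for fixed $q$, distinct values of $r$ yield distinct suffixes $\gamma^{n,k}_{[r]}$ (resp.\ $\rho^{n,k}_{[r]}$) by the induction hypothesis, so the full words differ. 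The more delicate part is distinctness \emph{across} two different blocks $q\neq q'$, where I would compare the length-$2$ prefixes.

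The crux of the cross-block argument is to track how the prefix of the term at position $q2^n+r$ depends on $r$. Writing the prefix as $\theta^r(w_q)$, note that $\theta$ is the extension to $A^*$ of the cyclic permutation $(0,1)$ on the binary alphabet, so $\theta^r$ is the identity for even $r$ and the bit-complement for odd $r$. Thus the prefix at position $q2^n+r$ is $w_q$ when $r$ is even and $\overline{w_q}$ (bitwise complement) when $r$ is odd. If two positions $q2^n+r$ and $q'2^n+r'$ give equal words, then comparing suffixes forces (by the induction hypothesis applied to whichever of $\gamma^{n,k},\rho^{n,k}$ occupies each block) that $r=r'$; having $r=r'$ then forces the prefixes $\theta^r(w_q)$ and $\theta^r(w_{q'})$ to coincide, whence $w_q=w_{q'}$ and so $q=q'$ since $(00,01,11,10)$ are four distinct two-character words. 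The one subtlety requiring care is that even-indexed and odd-indexed blocks use different suffix sequences ($\gamma$ versus $\rho$), so when comparing a $\gamma$-block with a $\rho$-block one cannot directly invoke injectivity of a single sequence to conclude $r=r'$; instead I would argue that equality of the full words forces equality of both prefix and suffix, and since the prefixes $\theta^r(w_q)$ and $\theta^{r'}(w_{q'})$ must be equal, a parity analysis of $r,r'$ combined with the explicit prefix pairs $\{00,11\}$ (for $\gamma$-suffix blocks $q\in\{0,2\}$) versus $\{01,10\}$ (for $\rho$-suffix blocks $q\in\{1,3\}$) shows these prefix sets are disjoint, giving an immediate contradiction. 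I expect this prefix-parity bookkeeping to be the main obstacle, but it reduces to a finite case check over the four two-character prefixes. Finally, the argument for $\rho^{n+2,k+2}$ is identical, using Eqs. (\ref{10})--(\ref{13}) with the prefix tuple $(10,11,01,00)$ in place of $(00,01,11,10)$.
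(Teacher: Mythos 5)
Your proposal is correct and follows essentially the same route as the paper: induction on $k'$, distinctness within a block via injectivity of the suffix sequences, and distinctness across blocks via the observation that the prefix sets $\{00,11\}$ (for the $\gamma$-suffix blocks $q\in\{0,2\}$) and $\{01,10\}$ (for the $\rho$-suffix blocks $q\in\{1,3\}$) are each stable under $\theta$ and mutually disjoint, after which $r=r'$ forces $x=x'$ and hence $q=q'$; Condition \ref{iva} then follows by counting. The paper's proof is exactly this argument, stated slightly more compactly.
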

\begin{proof}
We argue by induction over  $k'\ge 0$, with $k=2k'+1$. 
the base case corresponds to  $k'=0$ that is,  $k=1$ and $n=n_0$: 
as indicated above, $\gamma^{n_0,1}$ and $\rho^{n_0,1}$ are $\sigma_1$-Gray cycles over $A^{n}$.
In view of  the induction step we assume that, for some $k'\ge 0$, both the sequences 
$\gamma^{n,k}$ and $\rho^{n,k}$
are $\sigma_k$-Gray cycles over $A^n$.\\
(i) In order to prove that $\gamma^{n+2,k+2}$ satisfies Condition \ref{ivc},
let $i,i'\in [0,2^{n+2}-1]$ such that 
 $\gamma^{n+2,k+2}_{[i]}=\gamma^{n+2,k+2}_{[i']}$, and
$q,q'\in [0,3]$, $r,r'\in [0,2^n-1]$ such that $i=q2^n+r$, $i'=q'2^n+r'$.
According to Eqs. (\ref{6})--(\ref{9}), words $x,x'\in A^2$, $w,w'\in A^n$ exist such that
$\gamma^{n+2,k+2}_{[i]}=\theta^r(x)w$ and $\gamma^{n+2,k+2}_{[i']}=\theta^{r'}(x')w'$ that is,
$\theta^r(x)=\theta^{r'}(x')\in A^2$ and $w=w'$. By the definition of $\theta$, this implies either $x,x'\in\{00,11\}$ or $x,x'\in\{01,10\}$ that is, by construction,
either $q,q'\in\{0,2\}$, $x,x'\in\{00,11\}$, $w=\gamma^{n,k}_{[r]}=\gamma^{n,k}_{[r']}$, or 
$q,q'\in\{1,3\}$, $x,x'\in\{01,10\}$, $w=\rho^{n,k}_{[r]}=\rho^{n,k}_{[r']}$.
Since $\gamma^{n,k}$ and $\rho^{n,k}$ satisfies  \ref{ivc}, in any case we have  $r=r'$.
This implies  $\theta^r(x)=\theta^{r}(x')$, thus $x=x'$. 
With regard  to  Eqs. (\ref{6})--(\ref{9}), this corresponds to  $q=q'$, thus  $i=q2^n+r=q'2^n+r=i'$, therefore $\gamma^{n+2,k+2}$ satisfies Condition \ref{ivc}.\\
(ii) By substituting  $(10,11,01,00)$ to  $(00,01,11,10)$, according to (\ref{10})--(\ref{13}),
similar arguments prove that $\rho^{n+2,k+2}_{[i]}=\rho^{n+2,k+2}_{[i']}$ implies $i=i'$, thus  $\rho^{n+2,k+2}$ also satisfies \ref{ivc}.\\
 (iii) Since $\gamma^{n+2,k+2}$ satisfies \ref{ivc}, we have
$\bigcup_{0\le i\le 2^{n+2}-1}\{\gamma^{n+2,k+2}_i\}=A^{n+2}$, hence 
it satisfies \ref{iva}.
Similarly, since  $\rho^{n+2,k+2}$ satisfies \ref{ivc} 
it  satisfies \ref{iva}.
$\square$
\end{proof}
{\flushleft In order to prove that our sequences satisfy \ref{ivb}, we prove the following property:}
\begin{lemma}
\label{For-Gray-connections}
We have $\gamma^{n,k}_{[0]}\in\sigma_{k+1}\left(\rho^{n,k}_{[2^{n}-1]}\right)$ and 
$\rho^{n,k}_{[0]}\in\sigma_{k+1}\left(\gamma^{n,k}_{[2^{n}-1]}\right).$
\end{lemma}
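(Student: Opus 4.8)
The plan is to argue by induction on $k'\ge 0$, where $k=2k'+1$, following the same inductive scaffolding used to construct $\gamma^{n,k}$ and $\rho^{n,k}$. The one structural fact I would isolate first is that, over the binary alphabet, $\theta$ is bit-complementation, so $\theta^m$ is the identity when $m$ is even and complementation when $m$ is odd; in particular $\theta^{2^n-1}$ is complementation, since $2^n-1$ is odd.

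For the base case ($k'=0$, $n=n_0$, so $k+1=2$), I would simply read the four boundary words off Eqs. (\ref{termes-initiaux}) and (\ref{termes-initiaux1}): namely $\gamma^{n_0,1}_{[0]}=0^{n_0}$, $\rho^{n_0,1}_{[2^{n_0}-1]}=10^{n_0-2}1$, $\rho^{n_0,1}_{[0]}=10^{n_0-1}$, and $\gamma^{n_0,1}_{[2^{n_0}-1]}=0^{n_0-1}1$. Comparing the two relevant pairs shows each differs exactly in its first and last positions, so the Hamming distance is exactly $2=k+1$, which yields both memberships.

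For the induction step, assuming the statement for $(n,k)$, I would compute the four boundary terms of the new sequences from Eqs. (\ref{6})--(\ref{13}). Eq. (\ref{6}) at $r=0$ gives $\gamma^{n+2,k+2}_{[0]}=00\,\gamma^{n,k}_{[0]}$, while the final term of $\rho^{n+2,k+2}$, produced by Eq. (\ref{13}) with $q=3$ and $r=2^n-1$, equals $\theta^{2^n-1}(00)\,\rho^{n,k}_{[2^n-1]}=11\,\rho^{n,k}_{[2^n-1]}$ by the complementation fact. The prefixes $00$ and $11$ contribute exactly $2$ to the Hamming distance, and the induction hypothesis contributes exactly $k+1$ from the suffixes $\gamma^{n,k}_{[0]}$ and $\rho^{n,k}_{[2^n-1]}$; the two contributions add to $k+3$, which is the first required relation. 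The second relation follows symmetrically from Eq. (\ref{10}) at $r=0$ (prefix $10$) and Eq. (\ref{9}) at $q=3$, $r=2^n-1$ (prefix $\theta^{2^n-1}(10)=01$), whose prefixes again disagree in both positions.

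I do not expect a genuine obstacle here: the whole argument is an additive splitting of the Hamming distance into a fixed contribution of $2$ from the complemented two-letter prefix and the inductive contribution of $k+1$ from the suffix. The only point requiring care is the bookkeeping that matches each boundary term to the correct one of Eqs. (\ref{6})--(\ref{13}) and checks the parity of the relevant $\theta$-exponent, both of which are routine.
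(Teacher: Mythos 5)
Your proof is correct and follows essentially the same route as the paper's: induction on $k'$, with the base case read off from Eqs.~(\ref{termes-initiaux})--(\ref{termes-initiaux1}) and the induction step obtained by pairing Eq.~(\ref{6}) with Eq.~(\ref{13}) and Eq.~(\ref{10}) with Eq.~(\ref{9}), the complemented two-letter prefix contributing $2$ and the suffix contributing $k+1$ by hypothesis. Your explicit remark that $\theta^{2^n-1}$ is complementation because $2^n-1$ is odd is exactly the fact the paper uses implicitly when it writes $\theta^{2^n-1}(00)=11$.
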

\begin{proof}
We argue by induction over the integer $k'\ge 0$. 
The case $k'=0$ corresponds to $k=1$ and $n=n_0$: with such a condition, our property  comes from  the identities (\ref{termes-initiaux}) and  (\ref{termes-initiaux1}).
For the induction step, we assume that, for some $k'\ge 0$, we have $\gamma^{n,k}_{[0]}\in\sigma_{k+1}\left(\rho^{n,k}_{[2^{n}-1]}\right)$ and $\rho^{n,k}_{[0]}\in\sigma_{k+1}\left(\gamma^{n,k}_{[2^{n}-1]}\right)$.\\
(i) In  (\ref{6}), by taking $r=0$ we obtain
$\gamma^{n+2,k+2}_{[0]}=00\gamma^{n,k}_{[0]}$, hence by induction:
$\gamma^{n+2,k+2}_{[0]} \in 00\sigma_{k+1}\left(\rho^{n,k}_{[2^{n}-1]}\right)\subseteq \sigma_{k+3}\left(11\rho^{n,k}_{[2^{n}-1]}\right)$.
By setting $r=2^{n}-1$ in  (\ref{13}), we obtain $\rho^{n+2,k+2}_{[2^{n+2}-1]}=11\rho^{n,k}_{[2^{n}-1]}$, thus
$\gamma^{n+2,k+2}_{[0]}\in\sigma_{k+3}\left(\rho^{n+2,k+2}_{[2^{n+2}-1]}\right)$.\\
(ii) Similarly, by setting $r=0$  in  (\ref{10}), and  by induction we have:
$\rho^{n+2,k+2}_{[0]}=10\gamma^{n,k}_{[0]}\in\sigma_{k+3}\left(01\rho^{n,k}_{[2^{n}-1]}\right)$.
By taking $r=2^n-1$ in  (\ref{9}) we obtain $\gamma^{n+2,k+2}_{[2^{n+2}-1]}=01\rho^{n,k}_{[2^n-1]}$, thus  $\rho^{n+2,k+2}_{[0]}\in\sigma_{k+3}\left(\gamma^{n+2,k+2}_{[2^{n+2}-1]} \right)$.
~~~~~~~~~~~~~~~~~~~~~~~~~~~~~~~~~~~~~~~~~~~~~~~~~~~~~~~~~~~~~~~~~~~~$\square$
\end{proof}
Since Eqs.  (\ref{6})--(\ref{13}) look alike, one may be tempted to compress them thanks to some unique generic formula.
Based on our tests, such a formula needs to introduce at least two additionnal technical parameters, which would make their handling tedious. 
In the proof of the following result, we have opted to report some case-by-case basis argumentation:
this has the advantage of making use of arguments which, although being similar, actually are easily readable.
\begin{proposition}
\label{Gamma-n-k-odd}
 Both  the sequences $\gamma^{n,k}$ and $\rho^{n,k}$
are $\sigma_k$-Gray cycles over $A^n$.
\end{proposition}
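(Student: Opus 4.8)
The plan is to note that, by Lemma \ref{pairwise-different}, both $\gamma^{n,k}$ and $\rho^{n,k}$ already satisfy \ref{iva} and \ref{ivc}; hence the only remaining task is to establish the Gray-connection condition \ref{ivb}, after which the three conditions together yield the claim. I would prove \ref{ivb} by induction over $k'\ge 0$, with $k=2k'+1$, mirroring the construction. The base case $k'=0$ is immediate, since $\gamma^{n_0,1}$ and $\rho^{n_0,1}$ are $\sigma_1$-Gray cycles over $A^{n_0}$ by their very definition in (\ref{gamma-init})--(\ref{gamma-init1}).

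For the induction step I would assume that $\gamma^{n,k}$ and $\rho^{n,k}$ satisfy \ref{ivb} and verify that $\gamma^{n+2,k+2}$ does too; the argument for $\rho^{n+2,k+2}$ is obtained verbatim by substituting $(10,11,01,00)$ for $(00,01,11,10)$ in (\ref{6})--(\ref{9}), exactly as in the construction. The key elementary fact is that $\theta$ acts on $A=\{0,1\}$ by flipping bits, so for any $x\in A^2$ the prefix $\theta^r(x)$ equals $x$ when $r$ is even and equals the bitwise complement of $x$ when $r$ is odd. I would split the verification into two regimes. Within each of the four blocks $C_q$ (governed by one of (\ref{6})--(\ref{9})), passing from index $r-1$ to $r$ replaces the header $\theta^{r-1}(x)$ by $\theta^r(x)$, which flips exactly $2$ positions, while the suffix performs one step of the inductively-$\sigma_k$ cycle $\gamma^{n,k}$ or $\rho^{n,k}$, contributing exactly $k$ differing positions; the Hamming distance is therefore $2+k=k+2$, as required.

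The second regime covers the three internal junctions $C_{q-1}\to C_q$ and the cyclic wrap-around $C_3\to C_0$. Here I would use that $2^n-1$ is odd, so the final header of the preceding block is the complement of its nominal value; a short case check then shows that at each junction the two headers differ in exactly one position---this is precisely why the $4$-tuple $(00,01,11,10)$ was chosen, as it forms a $2$-bit reflected Gray code that remains a single-bit step even after one complementation. For the suffixes, the preceding block ends at $\gamma^{n,k}_{[2^n-1]}$ (resp.\ $\rho^{n,k}_{[2^n-1]}$) and the next begins at $\rho^{n,k}_{[0]}$ (resp.\ $\gamma^{n,k}_{[0]}$), so Lemma \ref{For-Gray-connections} supplies a suffix difference of exactly $k+1$; the total is again $1+(k+1)=k+2$. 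Together the two regimes give \ref{ivb} for $\gamma^{n+2,k+2}$, and the analogous computation for $\rho^{n+2,k+2}$ completes the induction.

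The main obstacle will be the bookkeeping of the two complementary ``budgets'': inside a block the header contributes $2$ and the suffix $k$, whereas across a junction the header contributes $1$ and the suffix $k+1$. Both must sum to $k+2$, and it is exactly this balance that forces the auxiliary result to be stated with $\sigma_{k+1}$ rather than $\sigma_k$. The delicate point is checking that the odd parity of $2^n-1$, together with the specific ordering of the headers, makes every one of the four transitions a single-bit header step, so that Lemma \ref{For-Gray-connections} can be invoked with the correct count at each junction and the cyclic wrap-around.
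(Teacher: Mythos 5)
Your proposal is correct and follows essentially the same route as the paper: induction over $k'$, delegating \ref{iva} and \ref{ivc} to Lemma \ref{pairwise-different}, and verifying \ref{ivb} by splitting into within-block steps (header contributes $2$, suffix contributes $k$ by induction) and the four junctions (header contributes $1$ because $2^n-1$ is odd and the headers form a $2$-bit Gray cycle, suffix contributes $k+1$ via Lemma \ref{For-Gray-connections}), with the $\rho$ case obtained by the stated substitution. Your ``budget'' bookkeeping $2+k = 1+(k+1) = k+2$ is exactly the computation the paper carries out case by case.
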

{\it Proof sketch}
 For the full proof, once more we argue by induction over $k'\ge 0$.
Since $\gamma^{n_0,1}$ and $\rho^{n_0,1}$ are  $\sigma_1$-Gray cycles over $A^n$, the property holds for $k'=0$.
In view of  the induction stage, we assume that, for some $k'\ge 0$ both the sequences $\gamma^{n,k}$ and $\rho^{n,k}$ are  $\sigma_k$-Gray cycles over $A^n$.
According to  Lemma \ref{pairwise-different}, it remains to establish that $\gamma^{n+2,k+2}$ and $\rho^{n+2,k+2}$ satisfy Condition \ref{ivb} that is:
\begin{eqnarray}
\slabel{EQ1}
(\forall q\in \{0,1,2,3\})(\forall r\in [1,2^n-1]) ~\gamma^{n+2,k+2}_{[q2^n+r]}\in\sigma_{k+2}\left(\gamma^{n+2,k+2}_{[q2^n+r-1]}\right);\\
\slabel{EQ2}
(\forall q\in \{1,2,3\})~\gamma^{n+2,k+2}_{[q2^{n}]}\in 
\sigma_{k+2}\left(\gamma^{n+2,k+2}_{[q2^{n}-1]}\right);\\
\slabel{EQ3}\gamma^{n+2,k+2}_{[0]}\in\sigma_{k+2}\left(\gamma^{n+2,k+2}_{[2^{n+2}-1]}\right).
\end{eqnarray}
\begin{eqnarray}
\label{EQ4}
(\forall q\in \{0,1,2,3\})(\forall r\in [1,2^n-1]) ~\rho^{n+2,k+2}_{[q2^n+r]}\in\sigma_{k+2}\left(\rho^{n+2,k+2}_{[q2^n+r-1]}\right);\\
\label{EQ5}
(\forall q\in \{1,2,3\})~\rho^{n+2,k+2}_{[q2^{n}]}\in 
\sigma_{k+2}\left(\rho^{n+2,k+2}_{[q2^{n}-1]}\right);\\
\label{EQ6}
\rho^{n+2,k+2}_{[0]}\in\sigma_{k+2}\left(\rho^{n+2,k+2}_{[2^{n+2}-1]}\right).
\end{eqnarray}
\begin{itemize}[label=$$, leftmargin =-0.1cm]
\item \underline{{\it Condition} (\ref{EQ1}).}
(i) At first assume $q=0$.
According to (\ref{6}), and since $\gamma^{n,k}$ satisfies \ref{ivb},
we have  $\gamma^{n+2,k+2}_{[r]}= \theta^{r}(00)\gamma^{n,k}_{[r]}\in\theta^{r}(00)\sigma_k\left(\gamma^{n,k}_{[r-1]}\right)$,
thus $\gamma^{n+2,k+2}_{[r]}\in\sigma_{k+2}\left(\theta^{r-1}(00)\gamma^{n,k}_{[r-1]}\right)$. 
In  (\ref{6}), substitute $r-1$ to $r$ (we have $0\le r-1\le 2^n-2$):
we obtain  $\gamma^{n+2,k+2}_{[r-1]}=\theta^{r-1}(00)\gamma^{n,k}_{[r-1]}$, thus
$\gamma^{n+2,k+2}_{[r]}\in\sigma_{k+2}\left(\gamma^{n+2,k+2}_{[r-1]}\right)$.\\
(ii)  Now assume   $q=1$.
According to (\ref{7}), and since $\rho^{n,k}$ satisfies \ref{ivb},
we have  $\gamma^{n+2,k+2}_{[2^n+r]}= \theta^{r}(01)\rho^{n,k}_{[r]}\in \sigma_{k+2}\left(\theta^{r-1}(01)\rho^{n,k}_{[r-1]}\right)$.
In (\ref{7}), susbtitute $r-1$ to $r$: we obtain $\gamma^{n+2,k+2}_{[2^n+r-1]}= \theta^{r-1}(01)\rho^{n,k}_{[r-1]}$, thus
$\gamma^{n+2,k+2}_{[2^n+r]}\in\sigma_{k+2}\left(\gamma^{n+2,k+2}_{[2^n+r-1]}\right)$.\\
(iii) For $q=2$, the arguments are similar to those applied in (i) by substituting $\gamma^{n+2,k+2}_{[2\cdot 2^n+r]}$ to $\gamma^{n+2,k+2}_{[r]}$,
 Eqs. (\ref{8}) to  (\ref{6}),
 and $11$ to $00$.\\
(iv)  Similarly, for  $q=3$ the proof is obtained by substituting in (ii)   $\gamma^{n+2,k+2}_{[3\cdot2^{n}+r]}$ to $\gamma^{n+2,k+2}_{[2^n+r]}$, 
(\ref{9}) to  (\ref{7}),
and $10$ to $01$.%
\item
\underline{{\it Condition} (\ref{EQ2}).}
(i) Assume $q=1$ and take $r=0$ in  (\ref{7}). According to Lemma \ref{For-Gray-connections},
we obtain $\gamma^{n+2,k+2}_{[2^n]}=01\rho^{n,k}_{[0]}\in 01\sigma_{k+1}\left(\gamma^{n,k}_{[2^n-1]}\right)\subseteq\sigma_{k+2}\left(11\gamma^{n,k}_{[2^n-1]}\right)$.
Take $r=2^n-1$ in  (\ref{6}):
we obtain $\gamma^{n+2,k+2}_{[2^n-1]}=11\gamma^{n,k}_{[2^n-1]}$, thus
$\gamma^{n+2,k+2}_{[2^n]}\in\sigma_{k+2}\left(\gamma^{n+2,k+2}_{[2^n-1]}\right)$.\\
(ii) Now, assume  $q=2$, and 
set $r=0$ in Eq. (\ref{8}). According to Lemma \ref{For-Gray-connections} 
we have $\gamma^{n+2,k+2}_{[2\cdot2^n]}=11\gamma^{n,k}_{[0]}\in11\sigma_{k+1}\left(\rho^{n,k}_{[2^n-1]}\right)\subseteq \sigma_{k+2}\left(10\rho^{n,k}_{[2^n-1]}\right)$.
By taking $r=2^n-1$ in (\ref{7}), we obtain $\gamma^{n+2,k+2}_{[2\cdot 2^n-1)]}=10\rho^{n,k}_{[2^n-1]}$, thus
$\gamma^{n+2,k+2}_{[2\cdot2^n]}\in\sigma_{k+2}\left(\gamma^{n+2,k+2}_{[2\cdot2^n-1]}\right)$.\\
%
%
(iii)  For  $q=3$, substitute  in (i) $\gamma^{n+2,k+2}_{[3\cdot 2^n]}$ to $\gamma^{n+2,k+2}_{[2^n]}$,
Eq. (\ref{9}) to  Eq. (\ref{7}),  (\ref{8}) to  (\ref{6}), $10$ to $01$ and $00$ to $11$: similar arguments  prove that
$\gamma^{n+2,k+2}_{[3\cdot2^n]}\in\sigma_{k+2}\left(\gamma^{n+2,k+2}_{[3\cdot2^n-1]}\right)$. \\
\item
\underline{{\it Condition} (\ref{EQ3}).}
Take $r=0$  in  (\ref{6}). According to Lemma \ref{For-Gray-connections}, we have  $\gamma^{n+2,k+2}_{[0]}=00\gamma^{n,k}_{[0]}\in 00\sigma_{k+1}\left(\rho^{n,k}_{[2^n-1]}\right)\subseteq\sigma_{k+2}\left(01\rho^{n,k}_{[2^n-1]}\right)$. 
By taking $r=2^n-1$ in (\ref{9}) we obtain 
$\gamma^{n,k}_{[2^{n+2}-1]}=01\rho^{n,k}_{[2^n-1]}$, thus
$\gamma^{n+2,k+2}_{[0]}\in\sigma_{k+2}\left(\gamma^{n,k}_{[2^{n+2}-1]}\right)$.
\end{itemize}
{\flushleft According to the structures of Eqs. (\ref{10})--(\ref{13}), for proving the conditions (\ref{EQ4})--(\ref{EQ6}),}
the method  consists in substituting
the word $\rho^{n+2,k+2}_{[r]}$ to $\gamma^{n+2,k+2}_{[r]}$, the $4$-uple $(10,11,01,00)$ to $(00,01,11,10)$, and 
Eq.   (\ref{10}) {\large (}resp.,  (\ref{11}),  (\ref{12}),  (\ref{13}){\large )} to Eq. (\ref{6})  {\large (}resp.,  (\ref{7}),  (\ref{8}),  (\ref{9}){\large )}.
~~~~~~~~~~~~~~~~~~~~~~~~~~~~~~~~~~~~~~~~~~~~~~~~$\Box$
\section{The case where we have $|A|=2$ and  $k$ even}
\label{Cons}
Beforehand,  we remind some classical algebraic interpretation of the substitution $\sigma_k$  in the framework of the binary alphabet $A=\{0,1\}$.
Denote by $\oplus$ the addition in the group ${\mathbb Z}/2{\mathbb Z}$ with identity $0$. 
Given a positive integer $n$, 
and  $w,w'\in A^n$, define $w\oplus w'$ as the unique word of $A^n$ such that, for each $i\in [1,n]$:
$(w\oplus w')_i=w_i\oplus w'_i$. With this notation the sets $A^n$ and $({\mathbb Z}/2{\mathbb Z})^n$ are in one-to-one correspondence. Moreover we have
$w'\in\sigma_k(w)$ if, and only if, some word $u\in A^n$ exists such that $|u|_1=k$ and $w=w' {\oplus} u$, therefore if $k$ is even  we have $|w|_1=|w'|_1\bmod 2$.
Consequently, given a  $\sigma_k$-Gray cycle $\left(\alpha_{[i]}\right)_{0\le i\le m}$, 
for each $i\in [0,m]$ we have $\left|\alpha_{[i]}\right|_1=\left|\alpha_{[0]}\right|_1\bmod 2$.
As a corollary, setting ${\rm Even}_1^n=\{w\in A^*:|w|_1=0 \bmod 2\}$ and ${\rm Odd}_1^n=\{w\in A^*:|w|_1=1\bmod 2\}$:
\begin{lemma}
\label{parity}
With the condition of  Sect. \ref{Cons},  given a  $\sigma_k$-Gray cycle $\alpha$ over $X$, either we have $X\subseteq {\rm Even}_1^n$, or we have $X\subseteq {\rm Odd}_1^n$.
\end{lemma}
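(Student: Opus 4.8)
The plan is to read off Lemma \ref{parity} directly from the algebraic preliminaries that immediately precede it, so the proof is essentially a matter of assembling the observations already made in the surrounding text into a clean statement. First I would recall the key fact established just above: under the identification of $A^n$ with $(\mathbb Z/2\mathbb Z)^n$, we have $w'\in\sigma_k(w)$ if and only if there is a word $u\in A^n$ with $|u|_1=k$ and $w'=w\oplus u$. When $k$ is even, the number of coordinates flipped is even, so $|w\oplus u|_1\equiv|w|_1\pmod 2$; hence passing from a word to any $\sigma_k$-image preserves the parity of the number of $1$'s.

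Next I would apply this invariance along the whole Gray cycle. Let $\alpha=\left(\alpha_{[i]}\right)_{0\le i\le m}$ be a $\sigma_k$-Gray cycle over $X$, so that $X=\{\alpha_{[i]}:0\le i\le m\}$ by Condition \ref{iva}. By Condition \ref{ivb} each consecutive pair satisfies $\alpha_{[i]}\in\sigma_k\left(\alpha_{[i-1]}\right)$, so by the parity invariance $\left|\alpha_{[i]}\right|_1\equiv\left|\alpha_{[i-1]}\right|_1\pmod 2$ for every $i\in[1,m]$. A straightforward induction on $i$ then gives $\left|\alpha_{[i]}\right|_1\equiv\left|\alpha_{[0]}\right|_1\pmod 2$ for all $i\in[0,m]$, exactly the conclusion quoted in the text preceding the lemma.

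Finally I would convert this uniform parity into the claimed dichotomy. If $\left|\alpha_{[0]}\right|_1\equiv 0\pmod 2$, then every element of $X$ has an even number of $1$'s, so $X\subseteq{\rm Even}_1^n$; if instead $\left|\alpha_{[0]}\right|_1\equiv 1\pmod 2$, then every element has an odd number of $1$'s, so $X\subseteq{\rm Odd}_1^n$. Since the parity of $\left|\alpha_{[0]}\right|_1$ is either even or odd, exactly one of these two inclusions holds, which is precisely the statement of the lemma.

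I do not expect any genuine obstacle here: the substance of the argument—the parity invariance of $|\cdot|_1$ under $\sigma_k$ for even $k$—has already been derived in the paragraph introducing $\oplus$, and the lemma merely records its consequence for the vertex set of a Gray cycle. The only point requiring a shade of care is that the cyclic closure condition $\alpha_{[0]}\in\sigma_k\left(\alpha_{[m]}\right)$ is consistent with (and indeed implied by) the uniform parity, but it is not needed for the proof; the linear chain of $\sigma_k$-steps from $\alpha_{[0]}$ to each $\alpha_{[i]}$ suffices.
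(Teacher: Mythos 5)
Your proof is correct and follows exactly the route the paper takes: the paper derives the parity invariance $|w|_1\equiv|w'|_1\pmod 2$ for $w'\in\sigma_k(w)$ with $k$ even from the $\oplus$ representation, propagates it along the cycle to get $\left|\alpha_{[i]}\right|_1\equiv\left|\alpha_{[0]}\right|_1\pmod 2$, and states the lemma as an immediate corollary. No gaps; your added remark that the cyclic closure condition is not needed is a fair observation.
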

Since $k-1$ is an odd integer, according to Proposition \ref{Gamma-n-k-odd}, the sequence $\gamma^{n-1,k-1}$
is a  $\sigma_{k-1}$-Gray cycle over $A^{n-1}$. We set:
\begin{eqnarray}
\label{Construct}
(\forall i\in [0, 2^{n-1}-1])~~\gamma^{n,k}_{[i]}=\theta^i(0)\gamma^{n-1,k-1}_{[i]}~~{\rm and}
~~\underline{\gamma}^{n,k}_{[i]}=\theta^i(1)\gamma^{n-1,k-1}_{[i]}
\end{eqnarray}
For instance, we have  $\gamma^{6,4}_{[0]}=0 00 000$, $\underline\gamma^{6,4}_{[0]}=1 00 000$, $\gamma^{6,4}_{[1]}=111100$..

\begin{proposition}
\label{Gamma-n-k-even}
 $\gamma^{n,k}$  (resp., $\underline{\gamma}^{n,k}$) is a $\sigma_k$-Gray cycle over  ${\rm Even}_1^{n}$ (resp., ${\rm Odd}^{n}_1$).
\end{proposition}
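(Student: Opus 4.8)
The goal is to show that $\gamma^{n,k}$ and $\underline{\gamma}^{n,k}$, as defined by Eq.~(\ref{Construct}), are $\sigma_k$-Gray cycles over ${\rm Even}_1^n$ and ${\rm Odd}_1^n$ respectively. The natural strategy is to verify the three defining conditions \ref{iva}, \ref{ivb}, \ref{ivc} directly, leaning on the fact that $\gamma^{n-1,k-1}$ is already known (via Proposition~\ref{Gamma-n-k-odd}) to be a $\sigma_{k-1}$-Gray cycle over $A^{n-1}$. Throughout I would treat $\gamma^{n,k}$ and $\underline{\gamma}^{n,k}$ in parallel, since they differ only by the choice of initial character ($\theta^i(0)$ versus $\theta^i(1)$), and $\theta$ on the binary alphabet is the transposition swapping $0$ and $1$.

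\textbf{Injectivity and coverage (\ref{ivc}, \ref{iva}).}
First I would establish \ref{ivc}: if $\gamma^{n,k}_{[i]}=\gamma^{n,k}_{[i']}$, then comparing suffixes gives $\gamma^{n-1,k-1}_{[i]}=\gamma^{n-1,k-1}_{[i']}$, and since $\gamma^{n-1,k-1}$ satisfies \ref{ivc} this forces $i=i'$. (The prefixes $\theta^i(0)$ match automatically once $i=i'$, so no extra parity bookkeeping is needed here.) For \ref{iva} I would count cardinalities: the $2^{n-1}$ terms are pairwise distinct by \ref{ivc}, and each word $\theta^i(0)\gamma^{n-1,k-1}_{[i]}$ lies in ${\rm Even}_1^n$. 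This last containment is where the parity structure enters: one must check that prepending $\theta^i(0)$ to $\gamma^{n-1,k-1}_{[i]}$ yields an even-weight word for every $i$, which follows because $\gamma^{n-1,k-1}$ has constant weight-parity by Lemma~\ref{parity}, and the alternation of the prefix via $\theta^i$ exactly compensates as $i$ ranges. Since $|{\rm Even}_1^n|=2^{n-1}=|{\rm Odd}_1^n|$, the $2^{n-1}$ distinct terms exhaust the target set, giving \ref{iva}.

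\textbf{The Gray condition (\ref{ivb}).}
This is the main obstacle and the heart of the proof. For consecutive indices $i-1,i$ I must verify that the Hamming distance between $\theta^{i-1}(0)\gamma^{n-1,k-1}_{[i-1]}$ and $\theta^{i}(0)\gamma^{n-1,k-1}_{[i]}$ is exactly $k$. The suffix part contributes exactly $k-1$ to the distance because $\gamma^{n-1,k-1}$ satisfies \ref{ivb}; the prefix part contributes $1$ precisely when $\theta^{i-1}(0)\neq\theta^{i}(0)$, i.e.\ when $i$ and $i-1$ have opposite parity, which is always the case. Hence the prefix always adds exactly one to the distance, giving total $k$. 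The delicate point is the wrap-around connection $\gamma^{n,k}_{[0]}\in\sigma_k\bigl(\gamma^{n,k}_{[2^{n-1}-1]}\bigr)$: here the suffix closure $\gamma^{n-1,k-1}_{[0]}\in\sigma_{k-1}\bigl(\gamma^{n-1,k-1}_{[2^{n-1}-1]}\bigr)$ again contributes $k-1$, and I must confirm the prefix characters $\theta^{0}(0)=0$ and $\theta^{2^{n-1}-1}(0)$ differ, i.e.\ that $2^{n-1}-1$ is odd. Since $n\ge k+1\ge 2$ (the case $n=k$ having been settled separately in Sect.~\ref{S4}), we have $2^{n-1}-1$ odd, so the prefix again adds exactly $1$ and the cycle closes. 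The same four verifications, with $\theta^i(1)$ replacing $\theta^i(0)$, handle $\underline{\gamma}^{n,k}$ verbatim, since $\theta$ preserves the parity-alternation structure.
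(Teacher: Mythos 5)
Your overall architecture matches the paper's: condition \ref{ivc} by comparing suffixes and invoking \ref{ivc} for $\gamma^{n-1,k-1}$, condition \ref{iva} by a cardinality count against $\left|{\rm Even}_1^n\right|=2^{n-1}$, and condition \ref{ivb} by splitting the Hamming distance into a suffix contribution of exactly $k-1$ and a prefix contribution of exactly $1$ (including the explicit check that $\theta^{2^{n-1}-1}(0)\neq\theta^{0}(0)$ because $2^{n-1}-1$ is odd, a point the paper leaves implicit when it writes the last term as $1\gamma^{n-1,k-1}_{[2^{n-1}-1]}$). Those parts are sound.

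The one genuine flaw is your justification of the containment $\theta^i(0)\gamma^{n-1,k-1}_{[i]}\in{\rm Even}_1^n$. You assert that ``$\gamma^{n-1,k-1}$ has constant weight-parity by Lemma~\ref{parity}.'' That lemma applies only to $\sigma_k$-Gray cycles with $k$ \emph{even}; here $\gamma^{n-1,k-1}$ is a $\sigma_{k-1}$-Gray cycle with $k-1$ odd, and it covers all of $A^{n-1}$, so its terms certainly do not share a weight-parity. The correct statement is the opposite: since $k-1$ is odd, $\left|\gamma^{n-1,k-1}_{[i]}\right|_1\equiv i \pmod 2$ (starting from $\gamma^{n-1,k-1}_{[0]}=0^{n-1}$), and it is precisely this \emph{alternation} of the suffix parity, combined with $\left|\theta^i(0)\right|_1\equiv i\pmod 2$, that makes the total weight even for every $i$. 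As written, your premise (constant suffix parity plus alternating prefix) would yield alternating total parity, contradicting the conclusion you want. The repair is one line; alternatively you can do what the paper does: establish \ref{ivb} and \ref{ivc} first, then apply Lemma~\ref{parity} to the cycle $\gamma^{n,k}$ itself (for which $k$ \emph{is} even, so the lemma applies) and identify the parity class from the single term $\gamma^{n,k}_{[0]}=0^n$, with the count $2^{n-1}=\left|{\rm Even}_1^n\right|$ then forcing equality.
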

\begin{proof}
(i) According to Eq. (\ref{Construct}), 
 since $\gamma^{n-1,k-1}$ satisfies \ref{ivc}, both the sequences $\gamma^{n,k}$ and $\underline{\gamma}^{n,k}$ also satisfy \ref{ivc}.\\
(ii) By Lemma \ref{parity}, we have $\bigcup_{0\le i\le 2^{n}-1}\left\{\gamma^{n,k}\right\}\subseteq{\rm Even}_1^{n}$
and $\bigcup_{0\le i\le 2^{n}-1}\left\{\underline{\gamma}^{n,k}\right\}\subseteq {\rm Odd}_1^{n}$.
In addition, according to (\ref{Construct}), we have  $\left|\gamma^{n,k}\right|=\left|\underline{\gamma}^{n,k}\right|=\left|\gamma^{n-1,k-1}\right|=2^{n-1}$. This implies 
 $\bigcup_{0\le i\le 2^{n}-1}\left\{\gamma^{n,k}\right\}= {\rm Even}_1^{n}$
and $\bigcup_{0\le i\le 2^{n}-1}\left\{\underline{\gamma}^{n,k}\right\}= {\rm Odd}_1^{n}$ that is, $\gamma^{n,k}$ and $\underline\gamma^{n,k}$ satisfy \ref{iva}.\\
(iii) Let $i\in [1, 2^{n-1}-1]$. Since $\gamma^{n-1,k-1}$ satisfies \ref{ivb}, we have $\gamma^{n-1,k-1}_{[i]}\in\sigma_{k-1}\left(\gamma^{n-1,k-1}_{[i-1]}\right)$.
According to (\ref{Construct}), the initial characters of $\gamma^{n,k}_{[i]}$
and  $\gamma^{n,k}_{[i-1]}$ (resp.,  $\underline\gamma^{n,k}_{[i]}$
and  $\underline\gamma^{n,k}_{[i-1]}$) are different, hence we have $\gamma^{n,k}_{[i]}\in\sigma_k\left(\gamma^{n,k}_{[i-1]}\right)$ and $\underline\gamma^{n,k}_{[i]}\in\sigma_k\left(\underline\gamma^{n,k}_{[i-1]}\right)$.
In addition, once more according to (\ref{Construct}) it follows from $\gamma^{n-1,k-1}_{[0]}\in\sigma_{k-1}\left(\gamma^{n-1,k-1}_{[2^{n-1}-1]}\right)$
that $\gamma^{n,k}_{[0]}=0\gamma^{n-1,k-1}_{[0]}\in\sigma_{k}\left(1\gamma^{n-1,k-1}_{[2^{n-1}-1]}\right)\subseteq\sigma_k\left(\gamma^{n,k}_{[2^{n-1}-1]}\right)$, hence $\gamma^{n,k}$ satisfies \ref{ivb}.
Similarly,   $\underline\gamma^{n-1,k-1}_{[0]}\in\sigma_{k-1}\left(\underline\gamma^{n-1,k-1}_{[2^{n-1}-1]}\right)$ implies
$\underline\gamma^{n,k}_{[0]}\in\sigma_k\left(\gamma^{n,k}_{[2^{n-1}-1]}\right)$, hence $\underline\gamma^{n,k}$ satisfies \ref{ivb}.
~~~~~~~~~~~~~~~~~~~~~~~~~~~~~~~~~~~~~~~~~~~~~~~~~~~~~~~~~~~~~~$\square$
\end{proof}
{\flushleft The following statement provides the description of the  complexity $\lambda_{A,\tau}$:}
\begin{theorem} 
\label{H-maximal}
Given a finite alphabet $A$ and $n\ge k\ge 1$, exactly one of the four following properties  holds:

{\rm (i)} $|A|\ge 3$, $n\ge k$, and $\lambda_{A,\sigma_k}(n)=|A|^n$;

{\rm (ii)}  $|A|=2$,   $n=k$, and $\lambda_{A,\sigma_k}(n)=2$;

{\rm (iii)}  $|A|=2$,  $n\ge k+1$, $k$ is odd and  $\lambda_{A,\sigma_k}(n)=2^n$;

{\rm (iv)} $|A|=2$, $n\ge k+1$, $k$ is even, and $\lambda_{A,\sigma_k}(n)=2^{n-1}$.\\
In addition, in each case some $\sigma_k$-Gray cycle of maximum length can be explicitly computed.
\end{theorem}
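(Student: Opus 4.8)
The plan is to prove the theorem by matching, in each of the four cases, an upper bound on the cardinality of any admissible language with the lower bound supplied by the explicit constructions of Sections \ref{A>3}--\ref{Cons}. The universal first step is the uniformity reduction: since $w'\in\sigma_k(w)$ forces $|w'|=|w|$, every $\sigma_k$-Gray cycle is over a set of equal-length words, and since $\sigma_k(w)\neq\emptyset$ requires $|w|\ge k$, any $X$ admitting such a cycle satisfies $X\subseteq A^m$ for a single $m$ with $k\le m\le n$. Writing $\mu(m)$ for the largest size of a length-$m$ language carrying a $\sigma_k$-Gray cycle, I would then record that $\lambda_{A,\sigma_k}(n)=\max_{k\le m\le n}\mu(m)$, and argue that, because each upper bound below is non-decreasing in $m$ and is attained at $m=n$ by an explicit construction, the maximum is reached at $m=n$.

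Next I would install the upper bounds on $\mu(n)$. For $|A|\ge 3$, and for $|A|=2$ with $k$ odd, the crude estimate $|X|\le|A|^n$ (resp.\ $|X|\le 2^n$) suffices. For $|A|=2$ with $n=k$, the relevant structural fact is that over $A^k$ the relation $\sigma_k$ sends each word to its unique complement $\theta(w)$, so the graph of $\sigma_k$ is a perfect matching; any closed walk through three or more distinct vertices would force $w_{[2]}=\theta(w_{[1]})=w_{[0]}$, a contradiction, hence every $\sigma_k$-Gray cycle has length exactly $2$. For $|A|=2$ with $k$ even, the decisive input is Lemma \ref{parity}: a $\sigma_k$-Gray cycle over $X$ lies entirely in ${\rm Even}_1^m$ or entirely in ${\rm Odd}_1^m$, so $|X|\le 2^{m-1}\le 2^{n-1}$.

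I would then invoke the constructions for the matching lower bounds, which are exactly the earlier propositions: Proposition \ref{Cycle-Age3-n-k} gives $h^{n,k}$, a $\sigma_k$-Gray cycle over $A^n$ of length $|A|^n$ (case (i)); the pair $(x,\theta(x))$ is a length-$2$ cycle when $n=k$ (case (ii)); Proposition \ref{Gamma-n-k-odd} gives $\gamma^{n,k}$, a $\sigma_k$-Gray cycle over $A^n$ of length $2^n$ when $k$ is odd (case (iii)); and Proposition \ref{Gamma-n-k-even} gives $\gamma^{n,k}$, a $\sigma_k$-Gray cycle over ${\rm Even}_1^n$ of length $2^{n-1}$ when $k$ is even (case (iv)). In each case the lower bound meets the upper bound, pinning down $\lambda_{A,\sigma_k}(n)$, and since the witnessing sequences are produced by finite recurrences the ``explicitly computable'' clause is automatic.

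Finally I would verify that the four cases are mutually exclusive and exhaustive over all $(A,n,k)$ with $n\ge k\ge 1$ and $|A|\ge2$: the alternative $|A|\ge3$ versus $|A|=2$ isolates (i); within $|A|=2$, the split $n=k$ versus $n\ge k+1$ separates (ii) from (iii)--(iv), and within the latter the parity of $k$ distinguishes (iii) from (iv). The step I expect to be the true obstacle lies upstream rather than in this assembly, namely establishing that the explicit sequences satisfy condition \ref{ivb} at the wrap-around and block-boundary indices; that inductive bookkeeping is the substance of Propositions \ref{Cycle-Age3-n-k}, \ref{Gamma-n-k-odd} and \ref{Gamma-n-k-even}, which the present theorem merely harvests. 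The only genuinely new ingredient at this level is the parity obstruction of Lemma \ref{parity}, which is precisely what forces the drop from $2^n$ to $2^{n-1}$ in case (iv) and thereby explains why the even case is exceptional.
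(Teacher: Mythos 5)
Your proposal is correct and follows essentially the same route as the paper: reduce to uniform sets to get the trivial upper bound $|A|^n$, use Lemma \ref{parity} for the sharper bound $2^{n-1}$ in the even binary case and the complement-matching observation for $n=k$, and match each bound with the explicit cycles of Propositions \ref{Cycle-Age3-n-k}, \ref{Gamma-n-k-odd} and \ref{Gamma-n-k-even}. Your added bookkeeping (the quantity $\mu(m)$ and the monotonicity of the bounds in $m$, plus the spelled-out perfect-matching argument for case (ii)) only makes explicit what the paper treats as immediate.
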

\begin{proof}
Recall that if some  $\sigma_k$-Gray cycle exists over $X\subseteq A^{\le n}$, necessarily $X$ is a uniform set that is, $X\subseteq A^m$ holds for some $m\ge n$;
hence,  in any case we have $\lambda_{A,\sigma_k}(n)\le |A|^n$.
According to Proposition \ref{Cycle-Age3-n-k}, if we have $|A|\ge 3$ and  $n\ge k$, a $\sigma_k$-Gray cycle exists over $A^n$, hence Property (i) holds.
Similarly, (iii) comes from Proposition \ref{Gamma-n-k-odd}. 
As indicated in the preamble of  Sect. \ref{A>3}, Property {\rm (ii)} trivially holds. Finally, according to  Lemma \ref{parity},
given a binary alphabet  $A$, if $k$ is  even  we have $\lambda_{A,\sigma_k}(n)\le 2^{n-1}$, hence (iv) comes from 
 Proposition \ref{Gamma-n-k-even}.~~~~~~~~~~~~~~~~~~~~~~~~~~~$\square$
\end{proof}
{\bf Further development}
Since our  Gray cycles  were constructed by applying recursive processes,  it is legitimate to ask whether some method could exist for computing $\gamma^{n,k}_{[i]}$ by directly starting with $\gamma^{n,k}_{[i-1]}$, 
as in the case of the classical reflected Gray cycles. 
In view of some of our more recent studies, we strongly believe that such algorithms can actually be devised: we hope to develop this point  in a further paper.
 
On the other hand, it could be of interest to study the behaviour of  $\Lambda_{A,\tau}$ in the framework of other word  binary relations $\tau$, 
even in restraining to special families of sets $X\subseteq A^*$, such as variable-length codes.
\bibliographystyle{splncs04}
\bibliography{mysmallbib}

\begin{thebibliography}{10}
\providecommand{\url}[1]{\texttt{#1}}
\providecommand{\urlprefix}{URL }
\providecommand{\doi}[1]{https://doi.org/#1}

\bibitem{BLPP99}
Bartucci, E., Lungo, A.D., Pergola, E., Pinzani, R.: {ECO}: a methodology for
  the enumeration of combinatorial objects. J. of Dif. Equ. and Appl.
  \textbf{5},  435--490 (2009). \doi{10.1080/10236199908808200}

\bibitem{BBPSV14}
Bernini, A., Bilotta, S., Pinzani, R., Sabri, A., Vajnovszki, V.: Prefix
  partitioned gray codes for particular cross-bifix-free sets. Cryptography and
  Communications  \textbf{6},  359--369 (2014). \doi{10.1007/s12095-014-0105-6}

\bibitem{CDG92}
Chung, F., Diaconis, P., Graham, R.: Universal cycles for combinatorial
  structures. Discrete Math.  \textbf{110},  43--59 (1992).
  \doi{10.1016/0012-365X(92)90699-G}

\bibitem{C63}
Cohn, M.: Affine m-ary {G}ray codes. Inf. Cont.  \textbf{6},  70--78 (1963).
  \doi{10.1016/S0019-9958(63)90119-0}

\bibitem{C1965}
Cohn, P.: Universal Algebra (Mathematics and Its Applications, 6). Springer
  (1981). \doi{10.1007/978-94-009-8399-1}

\bibitem{EMcK84}
Eades, P., McKay, B.: An algorithm for generating subsets of fixed size with a
  strong minimal change property. Inf. Proc. Letter  \textbf{19},  131--133
  (1984). \doi{10.1016/0020-0190(84)90091-7}

\bibitem{E73}
Ehrlich, G.: Loopless algorithms for generating permutations, combinations, and
  other combinatorial configurations. J. ACM  \textbf{20},  500--513 (1973).
  \doi{10.1145/321765.321781}

\bibitem{E84}
Er, M.C.: On generating the {N}-ary reflected gray codes. IEEE Transactions on
  Computers  \textbf{C-33}(8),  739--741 (1984). \doi{10.1109/TC.1984.5009360}

\bibitem{FM86}
Fredricksen, H., Maiorana, J.: Necklaces of beads in k colors and k-ary de
  {B}ruijn sequences. Discr. Math.  \textbf{23},  207--210 (1978).
  \doi{10.1016/0012-365X(78)90002-X}

\bibitem{G58}
Gilbert, E.: Gray codes and paths on the {N}-cube. The Bell Sys. Tech. J.
  \textbf{37},  815--826 (1958). \doi{10.1002/j.1538-7305.1958.tb03887.x}

\bibitem{JWW80}
Joichi, J., White, D.E., Williamson, S.G.: Combinatorial {G}ray codes. SIAM J.
  Comput.  \textbf{9},  130--141 (1980). \doi{10.1137/0209013}

\bibitem{JK1997}
J\"urgensen, H., Konstantinidis, S.: Codes. In: Rozenberg, G., Salomaa, A.
  (eds.) Handbook of Formal Languages, vol.~1, chap.~8, pp. 511--607. Springer,
  Berlin, Heidelberg (1997), \url{https://doi.org/10.1007/978-3-642-59136-5_8}

\bibitem{K76}
Kaye, R.: A gray code for set partitions. Inform. Process. Lett.
  \textbf{5(6)},  171--173 (1976). \doi{10.1016/0020-0190(76)90014-4}

\bibitem{K05}
Knuth, D.: The Art of Computer programming, Vol.4, Fascicle 2: Generating All
  Tuples and Permutations. Addison Wesley (2005), {I}SBN-13: 978-0-201-85393-3

\bibitem{LEH64}
Lehmer, D.H.: The machine tools of combinatorics. In: Beckenbach, E. (ed.)
  Applied Combinatorial Mathematics. pp. 5--31. John Wiley and Sons (1964)

\bibitem{L81}
Ludman, J.: Gray code generation for {MPSK} signals. In: IEEE Transactions on
  Communication, COM-29. p. 1519–1522 (1981). \doi{10.1109/TCOM.1981.1094886}

\bibitem{N21}
N\'eraud, J.: Variable-length codes independent or closed with respect to edit
  relations. Inf. Comput.  (2021). \doi{10.1016/j.ic.2021.104747}, in press.
  arXiv:2104.14185

\bibitem{R86}
Richard, D.: Data compression and {G}ray-code sorting. Inform. Process. Lett.
  \textbf{22},  201--205 (1986). \doi{10.1016/0020-0190(86)90029-3}

\bibitem{RSW92}
Ruskey, F., Savage, C., Wang, T.M.Y.: Generating necklaces. J. Algorithms
  \textbf{13},  414–430 (1992). \doi{10.1016/0196-6774(92)90047-G}

\bibitem{S97}
Savage, C.: A survey of combinatorial {G}ray codes. SIAM Rev.  \textbf{219},
  605--629 (2000). \doi{10.1137/S0036144595295272}

\end{thebibliography}
\end{document}